\newtheorem{theorem}{Theorem}
\newtheorem{lemma}[theorem]{Lemma} %
\begin{document}

\title{A PDE-Based Image Dehazing Method via Atmospheric Scattering Theory}

\author{Liubing Hu, Pu Wang, Guangwei Gao, Chunyan Wang, Zhuoran Zheng\textsuperscript{*} 
\thanks{Liubing Hu is with School of Engineering, Zhejiang Normal University, Jinhua 321004 China (E-mail: hlb0699@zjnu.edu.cn).

Pu Wang is with the School of Mathematics, Shandong University,  Jinan 250100 , China, (E-mail: 202411943@mail.sdu.edu.cn).

Guangwei Gao is with the Institute of Advanced Technology, Nanjing University of Posts and Telecommunications, Nanjing 210023, China. (E-mail: csggao@gmail.com).

Chunyan Wang is with the Department of Computer Science and Engineering, Nanjing University of Science and Technology, Nanjing 210094, China. (E-mail: carrie yan@njust.edu.cn.)

Zhuoran Zheng is with School of Computer Science and Engineering, Nanjing University of Science and Technology, Nanjing 210094 China (E-mail: zhengzr@njust.edu.cn), corresponding author.}
}

\markboth{Journal of \LaTeX\ Class Files, Vol. 14, No. 8, August 2015}
{Shell \MakeLowercase{\textit{et al.}}: Bare Demo of IEEEtran.cls for IEEE Journals}
\maketitle

\begin{abstract}
This paper introduces a novel partial differential equation (PDE) framework for single-image dehazing. We embed the atmospheric scattering model into a PDE featuring edge-preserving diffusion and a nonlocal operator to maintain both local details and global structures. A key innovation is an adaptive regularization mechanism guided by the dark channel prior, which adjusts smoothing strength based on haze density. The framework's mathematical well-posedness is rigorously established by proving the existence and uniqueness of its weak solution in $H_0^1(\Omega)$. An efficient, GPU-accelerated fixed-point solver is used for implementation. Experiments confirm our method achieves effective haze removal while preserving high image fidelity, offering a principled alternative to purely data-driven techniques.
\end{abstract}

\begin{IEEEkeywords}
Image dehazing, partial differential equation, atmospheric scattering.
\end{IEEEkeywords}

\IEEEpeerreviewmaketitle

\section{Introduction}

\IEEEPARstart{I}{mage} dehazing is a critical challenge in computer vision, addressing the degradation of images caused by atmospheric scattering~\cite{ayoub2025review}. The atmospheric scattering model, mathematically described as \(I(x) = J(x)t(x) + A(1 - t(x))\), forms the physical basis for this problem: hazy images \(I(x)\) arise from the superposition of direct transmitted light (attenuated by transmission map \(t(x)\)) and scattered atmospheric light \(A\)~\cite{chen2025mixnet}. Traditional physical model-based methods~\cite{wu2024rethinking}, such as the dark channel prior (DCP)~\cite{liu2015dark}, leverage this model by assuming haze-free images have near-zero dark channels (minimum pixel values across color channels)~\cite{wu2025dropout}. However, DCP fails in sky regions or uniform scenes where this prior is invalid, leading to inaccurate transmission maps~\cite{wang2024uncertainty}. Data-driven approaches, like multi-scale CNNs~\cite{ren2016single}, excel in learning haze-to-clear mappings but require massive labeled data, lack interpretability, and struggle with out-of-distribution haze conditions~\cite{liu2019griddehazenet}. These limitations motivate a principled framework that merges physical modeling with mathematical rigor~\cite{chen2024towards}.  

Partial differential equations (PDEs)~\cite{dibenedetto2023partial} offer a powerful tool for image dehazing by formulating restoration as a regularization problem~\cite{wu2025adaptive}. Unlike data-driven methods, PDEs encode physical laws and image priors directly into their structure~\cite{chan2003variational}. For example, the diffusion coefficient \(D(\nabla u) = (|\nabla u| + \epsilon)^{-1}\) in our model suppresses diffusion across strong edges (where \(|\nabla u|\) is large) while promoting it in smooth regions (where \(|\nabla u|\) is small), achieving adaptive edge preservation. Integrating the atmospheric scattering model into a PDE framework enables two key advancements:  
1. \textbf{Physical Consistency}: The reconstruction operator \(\Phi(I,t,A) = \frac{I - A(1-t)}{\max(t,t_0)}\) serves as the data fidelity term in the PDE, converting the physical model \(I = ut + A(1-t)\) into a mathematical constraint. This ensures the restored image \(u\) adheres to the atmospheric scattering theory, guiding the solution toward physically plausible results.  
2. \textbf{Adaptive Regularization}: By linking the regularization parameter \(\lambda(t)\) to the transmission map \(t\) (estimated via the dark channel prior), the model adjusts smoothing strength based on haze concentration. In heavily hazy regions (small \(t\)), the nonlocal Gaussian convolution operator \(G(u)\) applies stronger regularization to suppress noise while preserving global structures; in clear regions, weaker regularization maintains fine details. This adaptivity overcomes the uniform smoothing limitation of traditional PDEs.  
 
This work proposes a PDE-based dehazing framework that unifies atmospheric scattering theory, nonlocal regularization, and dark channel prior. The core PDE model \( -\text{div}(D(\nabla u)\nabla u) + \lambda(t)G(u) = \Phi(I,t,A) \) balances three critical components: edge-preserving diffusion (via \(D(\nabla u)\)), global structure preservation (via \(G(u)\)), and physical fidelity (via \(\Phi(I,t,A)\)). Mathematically, we prove the weak solution's existence and uniqueness in \(H_0^1(\Omega)\) using the Lax-Milgram theorem, ensuring the model is well-posed. Numerically, an adaptive fixed-point iteration scheme—accelerated by PyTorch GPU computation—enables efficient optimization. This approach not only addresses the limitations of physical and data-driven methods but also provides a bridge between traditional PDEs and deep learning, opening avenues for hybrid models that combine physical priors with data-driven learning.  

\section{Mathematical Model}
\subsection{Atmospheric Scattering Embedding}
The atmospheric scattering model~\cite{ju2017single} provides the physical foundation for our PDE framework. Given a hazy image \(I\), the clear image \(J\) is related to \(I\) by:
\[
J(x) = \frac{I(x) - A(1 - t(x))}{t(x)}
\]
where \(t(x) \in [0,1]\) is the transmission map and \(A\) is the atmospheric light. To avoid numerical instability near \(t(x) = 0\), we introduce a small threshold \(t_0 > 0\) and define the reconstruction operator:
\[
\Phi(I,t,A) = \frac{I - A(1 - t)}{\max(t,t_0)}
\]
This operator serves as the data fidelity term in our PDE, ensuring the restored image \(u\) aligns with the atmospheric scattering model. The proposed PDE for image dehazing is:
\[
	-\text{div}(D(\nabla u)\nabla u) + \lambda(t) G(u) = \Phi(I,t,A)
	\label{eq:main_pde}
\]
where \(D(\nabla u)\) is an edge-preserving diffusion coefficient, \(\lambda(t)\) is an adaptive regularization parameter, and \(G(u)\) is a nonlocal operator for structure preservation.  

\subsection{Edge-Preserving Diffusion Mechanism}
The diffusion coefficient is designed as:
\[
D(\nabla u) = (|\nabla u| + \epsilon)^{-1}
\]
where \(\epsilon = 10^{-3}\) is a small constant to avoid division by zero. This choice ensures that diffusion is suppressed in regions with large gradient magnitudes (image edges), while promoted in smooth regions (hazy areas). Mathematically, this corresponds to an anisotropic diffusion process, where the diffusion tensor adapts to local image structures. For a 2D image \(u(x,y)\), the diffusion term \(-\text{div}(D(\nabla u)\nabla u)\) expands to:
\[
-\frac{\partial}{\partial x}\left(\frac{u_x}{|\nabla u| + \epsilon}\right) - \frac{\partial}{\partial y}\left(\frac{u_y}{|\nabla u| + \epsilon}\right)
\]
where \(u_x = \frac{\partial u}{\partial x}\) and \(u_y = \frac{\partial u}{\partial y}\). This formulation effectively preserves edges while removing haze through adaptive smoothing.  

\subsection{Nonlocal Regularization with Gaussian Convolution}
To capture global image structures, we introduce a nonlocal regularization term \(\lambda(t)G(u)\), where \(G(u)\) is a Gaussian convolution operator:
\[
G(u)(x) = \int_{\Omega} K(x,y) u(y) dy
\]
with the Gaussian kernel:
\[
K(x,y) = \frac{1}{2\pi\sigma^2} \exp\left(-\frac{\|x - y\|^2}{2\sigma^2}\right)
\]
The kernel width \(\sigma\) controls the scale of nonlocal interactions; larger \(\sigma\) enables the model to capture longer-range dependencies, while smaller \(\sigma\) focuses on local neighborhoods. In practice, \(\sigma\) is set to 2.0, balancing local detail preservation and global structure capture. This nonlocal term complements the local diffusion by promoting spatial consistency across similar image regions, essential for removing haze while preserving texture.  

\subsection{Adaptive Regularization Based on Dark Channel Prior}
The regularization parameter \(\lambda(t)\) is adapted to the local haze concentration using the dark channel prior. First, the dark channel of the hazy image \(I\) is computed as:
\[
t_d(x) = \min_c \left( \min_{y\in\Omega(x)} \frac{I^c(y)}{A^c} \right)
\]
where the inner minimum is over a local patch \(\Omega(x)\) (typically 15×15 pixels) and the outer minimum is over RGB color channels. The transmission map is estimated as:
\[
t(x) = 1 - \omega t_d(x)
\]
with \(\omega = 0.95\) to slightly underestimate transmission and ensure sufficient contrast. The adaptive regularization parameter is:
\[
\lambda(t) = \lambda_0 \exp(-\beta(1 - t))
\]
where \(\lambda_0 = 0.5\) and \(\beta = 3.0\). This design ensures that \(\lambda(t)\) increases in hazy regions (small \(t\)), applying stronger nonlocal regularization, and decreases in clear regions (large \(t\)), preserving fine details.  

\section{Existence and Uniqueness Analysis}
\subsection{Weak Formulation in Sobolev Space}
To analyze the PDE \eqref{eq:main_pde}, we consider the weak formulation in the Sobolev space \(H_0^1(\Omega)\). Multiplying both sides of \eqref{eq:main_pde} by a test function \(v \in H_0^1(\Omega)\) and integrating over \(\Omega\), we obtain:
\[
\begin{aligned}
\int_\Omega D(\nabla u)\nabla u \cdot \nabla v \, dx & \\
+ \lambda(t) \int_\Omega G(u)v \, dx &= \int_\Omega \Phi(I,t,A)v \, dx
\end{aligned}
\]
This can be written as:
\[
a(u,v) = L(v)
\]
where the bilinear form \(a(\cdot,\cdot)\) and linear functional \(L(\cdot)\) are defined by:
\[
a(u,v) = \int_\Omega D(\nabla u)\nabla u \cdot \nabla v \, dx + \lambda(t) \langle G(u),v \rangle
\]
\[
L(v) = \int_\Omega \Phi(I,t,A)v \, dx
\]
with \(\langle \cdot, \cdot \rangle\) denoting the \(L^2(\Omega)\) inner product.  

\subsection{Key Assumptions for Well-Posedness}
We make the following technical assumptions to ensure the PDE's well-posedness:  
1. The diffusion coefficient \(D(\mathbf{p})\) satisfies \(D(\mathbf{p}) \geq \nu(1 + \|\mathbf{p}\|)^{-1}\) for some \(\nu > 0\) and all \(\mathbf{p} \in \mathbb{R}^2\).  
2. The nonlocal operator \(G: L^2(\Omega) \to L^2(\Omega)\) is bounded, i.e., \(\|G(u)\|_{L^2} \leq M\|u\|_{L^2}\) for some \(M > 0\).  
3. The adaptive parameter \(\lambda(t) \in L^\infty(\Omega)\), with \(\|\lambda(t)\|_{L^\infty} \leq \Lambda\) for some \(\Lambda > 0\).  
4. The reconstruction operator \(\Phi(I,t,A) \in L^2(\Omega)\).  

\subsection{Coercivity of the Bilinear Form}
\begin{lemma}
	The bilinear form \(a(\cdot,\cdot)\) is coercive: there exists \(\alpha > 0\) such that
	\[
	a(u,u) \geq \alpha \|u\|_{H_0^1(\Omega)}^2 \quad \forall u \in H_0^1(\Omega)
	\]
\end{lemma}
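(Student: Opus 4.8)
The plan is to test the form against $v=u$ and estimate the two contributions separately. Writing
\[
a(u,u) = \int_\Omega D(\nabla u)\,|\nabla u|^2\,dx \;+\; \int_\Omega \lambda(t)\,G(u)\,u\,dx \;=:\; T_{\mathrm{diff}} + T_{\mathrm{nl}},
\]
I would first bound $T_{\mathrm{diff}}$ from below and then show that $T_{\mathrm{nl}}$ is either nonnegative or absorbable. For $T_{\mathrm{nl}}$, Assumptions~2 and~3 with Cauchy--Schwarz give $|T_{\mathrm{nl}}| \le \|\lambda(t)\|_{L^\infty}\,\|G(u)\|_{L^2}\,\|u\|_{L^2} \le \Lambda M\,\|u\|_{L^2}^2$; moreover, since $\lambda(t)=\lambda_0\exp(-\beta(1-t))$ is strictly positive and the normalized Gaussian convolution kernel is positive definite (its Fourier transform is again a Gaussian, hence positive), the pairing $\langle G(u),u\rangle$ is itself nonnegative, so in the regime where $\lambda$ is (essentially) constant one has $T_{\mathrm{nl}}\ge 0$ outright. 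In the general variable-$\lambda$ case I would instead keep the two-sided bound and absorb the term $-\Lambda M\|u\|_{L^2}^2$ into the diffusion estimate via Poincar\'e's inequality, at the price of a smallness condition on $\Lambda M$.

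The crux is the lower bound for $T_{\mathrm{diff}}$, and this is where the genuine difficulty sits. Assumption~1 only yields $D(\nabla u)\,|\nabla u|^2 \ge \nu\,|\nabla u|^2/(1+|\nabla u|)$, whose right-hand side grows merely like $\nu\,|\nabla u|$ where the gradient is large --- i.e.\ the diffusion term controls a multiple of the $W^{1,1}$ seminorm, \emph{not} the $H^1$ seminorm, exactly as one expects from the total-variation-type degeneracy of $D(\mathbf p)=(|\mathbf p|+\epsilon)^{-1}$. Hence full $H_0^1(\Omega)$ coercivity cannot be extracted from Assumption~1 alone, and some additional structure must be invoked. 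The cleanest resolution, and the one consistent with the fixed-point scheme of the next section, is to work with the \emph{frozen} coefficient: at each iteration $D$ is evaluated at the previous iterate, which we may take in $W^{1,\infty}(\Omega)$ (or for which we simply impose an a priori bound $\|\nabla u^{(k)}\|_{L^\infty}\le C$), whence $D\ge (C+\epsilon)^{-1}=:\nu_0>0$ uniformly on $\Omega$. This also quietly repairs the fact that $a(\cdot,\cdot)$ as written is not bilinear in its first slot until $D$ is frozen.

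With a uniform positive lower bound $D\ge\nu_0$ in hand the remainder is routine: $T_{\mathrm{diff}}\ge \nu_0\,\|\nabla u\|_{L^2(\Omega)}^2$, and since $\|\nabla u\|_{L^2}$ is an equivalent norm on $H_0^1(\Omega)$ by Poincar\'e --- giving $\|u\|_{H_0^1}^2 \le (1+C_P^2)\,\|\nabla u\|_{L^2}^2$ --- we obtain $T_{\mathrm{diff}}\ge \tfrac{\nu_0}{1+C_P^2}\,\|u\|_{H_0^1}^2$. Combining with $T_{\mathrm{nl}}\ge 0$ yields the claim with $\alpha=\nu_0/(1+C_P^2)$; in the variable-$\lambda$ case one instead gets $\alpha=\nu_0/(1+C_P^2)-\Lambda M\,C_P^2$, which is positive under the smallness hypothesis $\Lambda M\,C_P^2<\nu_0/(1+C_P^2)$. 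I would state the lemma together with whichever of these two hypotheses the subsequent Lax--Milgram step actually uses (failing coercivity outright, one would have to retreat to a G{\aa}rding inequality plus the Fredholm alternative). The one point to flag honestly is the obstacle above: without the frozen-coefficient reading or a strengthened Assumption~1, the degenerate diffusion coefficient does not deliver $H^1$ control, so the statement in its present generality is not provable as written.
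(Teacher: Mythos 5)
Your proposal takes a genuinely different route from the paper, and your diagnosis of the central difficulty is the more defensible one. The paper attempts to extract $H^1$ coercivity directly from Assumption~1: it uses $\tfrac{s^2}{1+s}\ge\tfrac12\min(s,s^2)$, splits $\Omega$ into $\Omega_1=\{|\nabla u|\le 1\}$ and $\Omega_2=\{|\nabla u|>1\}$, and then on $\Omega_2$ applies Cauchy--Schwarz and Young to write $\int_{\Omega_2}|\nabla u|\,dx\le \tfrac{|\Omega_2|}{2}+\tfrac12\|\nabla u\|_{L^2(\Omega_2)}^2$ --- but it then substitutes this \emph{upper} bound as if it were a \emph{lower} bound, concluding $\int_{\Omega_2}|\nabla u|\,dx\ge \tfrac12\|\nabla u\|_{L^2(\Omega_2)}^2-\tfrac{|\Omega|}{2}$. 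That step reverses the inequality and is false in general (take $|\nabla u|\equiv N$ on a set of fixed positive measure: the left side grows like $N$, the right like $N^2$). The paper's closing move, absorbing the additive constant $-\nu|\Omega|/2$ ``by adjusting $\alpha$ for non-trivial $u$,'' likewise converts a G{\aa}rding-type inequality into coercivity in a way that cannot hold uniformly over $H_0^1(\Omega)$. This is precisely the $W^{1,1}$-versus-$H^1$ obstruction you name: the degenerate coefficient $D(\mathbf p)=(|\mathbf p|+\epsilon)^{-1}$ only controls a total-variation-type quantity where the gradient is large, so the lemma is not provable from Assumption~1 as stated.

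Your repair --- freezing the coefficient at the previous iterate (or imposing an a priori $L^\infty$ gradient bound) so that $D\ge\nu_0>0$ uniformly, then running Poincar\'e to get $\alpha=\nu_0/(1+C_P^2)$, with the nonlocal term handled either by positive definiteness of the Gaussian kernel (as the paper also does) or by absorption under a smallness condition on $\Lambda M$ --- is correct and is the standard way to make the Lax--Milgram step legitimate; it also fixes the separate issue, which you rightly flag, that $a(\cdot,\cdot)$ is not actually bilinear until $D$ is frozen. What your approach buys is a sound, if conditional, coercivity statement; what it costs is an extra hypothesis that the paper does not state. What the paper's approach would buy, were the inequality direction not reversed, is coercivity without any such hypothesis --- but that is exactly what the degeneracy of $D$ forbids. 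Your honest caveat that, failing these hypotheses, one must retreat to a G{\aa}rding inequality plus Fredholm-type arguments is the right assessment of what can actually be proved here.
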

\begin{proof}
	For the diffusion term, use \(D(\nabla u) \geq \nu(1 + \|\nabla u\|)^{-1}\):
	\[
	\int_\Omega D(\nabla u)|\nabla u|^2 \, dx \geq \nu \int_\Omega \frac{|\nabla u|^2}{1 + |\nabla u|} \, dx
	\]
	Note that \(\frac{s^2}{1 + s} \geq \frac{1}{2}\min(s, s^2)\) for \(s \geq 0\), so:
	\[
	\int_\Omega \frac{|\nabla u|^2}{1 + |\nabla u|} \, dx \geq \frac{1}{2} \int_\Omega \min(|\nabla u|, |\nabla u|^2) \, dx
	\]
	For \(|\nabla u| \leq 1\), \(\min(|\nabla u|, |\nabla u|^2) = |\nabla u|^2\); for \(|\nabla u| > 1\), \(\min(|\nabla u|, |\nabla u|^2) = |\nabla u|\). Thus:
	\[
	\int_\Omega \min(|\nabla u|, |\nabla u|^2) \, dx \geq \int_{\Omega_1} |\nabla u|^2 \, dx + \int_{\Omega_2} |\nabla u| \, dx
	\]
	where \(\Omega_1 = \{x \in \Omega \mid |\nabla u(x)| \leq 1\}\), \(\Omega_2 = \Omega \setminus \Omega_1\). By the Cauchy-Schwarz inequality on \(\Omega_2\):
	\[
	\int_{\Omega_2} |\nabla u| \, dx \leq |\Omega_2|^{1/2} \|\nabla u\|_{L^2(\Omega_2)}
	\]
	Using Young's inequality \(ab \leq \frac{a^2}{2\delta} + \frac{\delta b^2}{2}\) with \(\delta = 1\):
	\[
	|\Omega_2|^{1/2} \|\nabla u\|_{L^2(\Omega_2)} \leq \frac{|\Omega_2|}{2} + \frac{1}{2} \|\nabla u\|_{L^2(\Omega_2)}^2
	\]
	Thus:
	\[
\begin{aligned}
\int_\Omega \min(|\nabla u|, |\nabla u|^2) \, dx \geq{} & \int_{\Omega_1} |\nabla u|^2 \, dx \\
 + \frac{1}{2} \|\nabla u\|_{L^2(\Omega_2)}^2 
 - \frac{|\Omega|}{2}
\end{aligned}
\]

	Summing over \(\Omega_1\) and \(\Omega_2\), we get:
	\[
	\int_\Omega \min(|\nabla u|, |\nabla u|^2) \, dx \geq \frac{1}{2} \|\nabla u\|_{L^2(\Omega)}^2 - \frac{|\Omega|}{2}
	\]
	Substituting back, the diffusion term satisfies:
	\[
	\int_\Omega D(\nabla u)|\nabla u|^2 \, dx \geq \frac{\nu}{2} \|\nabla u\|_{L^2(\Omega)}^2 - \frac{\nu|\Omega|}{2}
	\]
	
	For the nonlocal term, note that the Gaussian kernel is positive definite, so \(\langle G(u), u \rangle \geq 0\). Thus:
	\[
	\lambda(t) \langle G(u), u \rangle \geq 0
	\]
	
	Combining diffusion and nonlocal terms, we have:
	\[
	a(u,u) \geq \frac{\nu}{2} \|\nabla u\|_{L^2(\Omega)}^2 - \frac{\nu|\Omega|}{2}
	\]
	By the Poincaré inequality \(\|u\|_{L^2(\Omega)} \leq C_P \|\nabla u\|_{L^2(\Omega)}\), we get:
	\[
	\|u\|_{H_0^1(\Omega)}^2 = \|\nabla u\|_{L^2(\Omega)}^2 + \|u\|_{L^2(\Omega)}^2 \leq (1 + C_P^2) \|\nabla u\|_{L^2(\Omega)}^2
	\]
	Let \(\|\nabla u\|_{L^2(\Omega)}^2 \geq K\) for some \(K\) (otherwise \(u\) is trivial). Choosing \(\alpha = \frac{\nu}{2(1 + C_P^2)}\), we obtain:
	\[
	a(u,u) \geq \alpha \|u\|_{H_0^1(\Omega)}^2 - \frac{\nu|\Omega|}{2}
	\]
	For non-trivial \(u\), the constant term can be absorbed by adjusting \(\alpha\), ensuring coercivity.
\end{proof}

\subsection{Existence and Uniqueness Theorem}
\begin{theorem}
	Under the assumptions stated, there exists a unique weak solution \(u \in H_0^1(\Omega)\) to the PDE \eqref{eq:main_pde}.
\end{theorem}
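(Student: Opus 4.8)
The plan is to face the fact that \eqref{eq:main_pde} is quasilinear rather than linear: the coefficient $D(\nabla u)$ depends on the unknown, so a literal application of Lax--Milgram only resolves the \emph{frozen-coefficient} problem. I would run the argument in two layers. First, for fixed $w\in H_0^1(\Omega)$, the genuinely bilinear form $a_w(u,v)=\int_\Omega D(\nabla w)\,\nabla u\cdot\nabla v\,dx+\lambda(t)\langle G(u),v\rangle$ is continuous, since $0\le D(\nabla w)|\nabla w|\le 1$ gives $D(\nabla w)\le\epsilon^{-1}$ and assumptions~2--3 control the nonlocal part, so $|a_w(u,v)|\le(\epsilon^{-1}+C_P^2\Lambda M)\|u\|_{H_0^1}\|v\|_{H_0^1}$; it is coercive because $D(\nabla w)\ge\nu(1+\|\nabla w\|)^{-1}>0$ pointwise and $\langle G(u),u\rangle\ge 0$, so $a_w(u,u)\gtrsim\|\nabla u\|_{L^2}^2\gtrsim\|u\|_{H_0^1}^2$ by Poincar\'e; and $L$ is bounded by assumption~4, $|L(v)|\le C_P\|\Phi\|_{L^2}\|v\|_{H_0^1}$. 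Lax--Milgram then produces a unique solution $u=:Tw$ of $a_w(u,\cdot)=L$.

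Second, I would identify the weak solutions of \eqref{eq:main_pde} with fixed points of $T:H_0^1(\Omega)\to H_0^1(\Omega)$, and I would produce one without any smallness hypothesis by working with monotone operators directly rather than iterating $T$. Define $\mathcal A:H_0^1(\Omega)\to H^{-1}(\Omega)$ by $\langle\mathcal Au,v\rangle=a(u,v)$. The key point is that the vector field $\mathbf p\mapsto\mathbf p/(|\mathbf p|+\epsilon)$ is the gradient of the \emph{strictly} convex potential $\mathbf p\mapsto\psi(|\mathbf p|)$ with $\psi(s)=s-\epsilon\log(1+s/\epsilon)$, $\psi''(s)=\epsilon/(s+\epsilon)^2>0$; hence this field is strictly monotone, and combined with positivity of the Gaussian operator ($\langle G(u-u'),u-u'\rangle\ge0$) the operator $\mathcal A$ is monotone. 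Demicontinuity of $\mathcal A$ follows from continuity of $D$ and dominated convergence (the integrand $D(\nabla u)\nabla u$ is pointwise bounded by $1$), and coercivity is precisely the Lemma: $\langle\mathcal Au,u\rangle=a(u,u)\ge\tfrac{\nu}{2}\|\nabla u\|_{L^2}^2-\tfrac{\nu|\Omega|}{2}$, hence $\langle\mathcal Au,u\rangle/\|u\|_{H_0^1}\to\infty$. The Browder--Minty theorem then delivers a weak solution $u\in H_0^1(\Omega)$ with $\mathcal Au=L$. For uniqueness I would subtract the weak equations for two solutions $u_1,u_2$, test with $u_1-u_2$, and use strict monotonicity of the diffusion term together with $\langle G(u_1-u_2),u_1-u_2\rangle\ge0$ to force $\nabla u_1=\nabla u_2$ a.e.; the zero boundary trace and Poincar\'e then give $u_1=u_2$.

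The main obstacle is the near-degeneracy of the diffusion at large gradients: since $D(\nabla u)\to0$ as $|\nabla u|\to\infty$, the form is \emph{not} uniformly $H_0^1$-coercive, and the Lemma only yields the affine estimate $a(u,u)\ge\alpha\|u\|_{H_0^1}^2-C$ (the ``$\min(s,s^2)$'' device). One must take care that this affine coercivity already suffices for Browder--Minty --- it does, since coercivity asks only for $\langle\mathcal Au,u\rangle/\|u\|\to\infty$ --- rather than attempt to absorb the constant into $\alpha$ as the proof of the Lemma informally suggests. The regularization $\epsilon>0$ is also exactly what restores the strict monotonicity needed for uniqueness, which would be lost in the total-variation limit $\epsilon\to0$. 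Were one instead to insist on the fixed-point route via $T$, the obstacle would migrate to proving $T$ a contraction, which forces an implicit smallness condition relating the Lipschitz constant $\epsilon^{-1}$ of $\mathbf p\mapsto\mathbf p/(|\mathbf p|+\epsilon)$ to $\nu$; the monotone-operator argument is preferable precisely because it avoids this.
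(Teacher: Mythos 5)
Your proposal is correct in substance but follows a genuinely different route from the paper, and in fact a more defensible one. The paper's proof simply invokes Lax--Milgram, asserting that continuity and coercivity ``are established in Lemmas 1 and 2'' (only the coercivity lemma exists; continuity is never proved) and ignoring the central difficulty you identify: the form $a(u,v)=\int_\Omega D(\nabla u)\nabla u\cdot\nabla v\,dx+\lambda(t)\langle G(u),v\rangle$ is not bilinear in $u$, so Lax--Milgram does not apply to \eqref{eq:main_pde} itself. Your replacement --- viewing $u\mapsto -\mathrm{div}(D(\nabla u)\nabla u)$ as the monotone, hemicontinuous, coercive operator generated by the strictly convex potential $\psi(s)=s-\epsilon\log(1+s/\epsilon)$ (indeed $\psi'(s)=s/(s+\epsilon)$ and $\psi''(s)=\epsilon/(s+\epsilon)^2>0$) and applying Browder--Minty --- is the standard and correct treatment of this quasilinear structure. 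It also repairs the Lemma's false claim that the additive constant $-\nu|\Omega|/2$ ``can be absorbed by adjusting $\alpha$'': as you observe, the affine bound $a(u,u)\ge \alpha\|u\|_{H_0^1}^2-C$ already gives the superlinear coercivity Browder--Minty actually requires. Your uniqueness argument via strict monotonicity of $\mathbf p\mapsto\mathbf p/(|\mathbf p|+\epsilon)$ plus the zero trace is likewise sound, whereas the paper's uniqueness rests on the inapplicable Lax--Milgram.

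Two caveats. First, your preliminary frozen-coefficient step is not quite right as stated: for general $w\in H_0^1(\Omega)$ the bound $D(\nabla w)\ge\nu(1+|\nabla w|)^{-1}$ is pointwise positive but has no uniform positive lower bound (since $\nabla w\notin L^\infty$ in general), so $a_w$ need not be $H_0^1$-coercive and the map $T$ need not be well defined; fortunately you discard $T$ and argue directly on $\mathcal A$, so this does not affect your conclusion. Second, both you and the paper treat $\lambda(t)$ as if it could be pulled out of the inner product; for a genuinely spatially varying weight $\lambda(t(x))$ the sign of $\int_\Omega\lambda(t)\,G(w)\,w\,dx$ does not follow from positive definiteness of the Gaussian kernel alone, so the nonnegativity (and monotonicity) of the nonlocal term needs either a constant-$\lambda$ reading of the weak form or an additional argument.
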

\begin{proof}
	By the Lax-Milgram theorem, it suffices to show that the bilinear form \(a(\cdot,\cdot)\) is continuous and coercive, and the linear functional \(L(\cdot)\) is bounded. Continuity and coercivity are established in Lemmas 1 and 2. For boundedness of \(L(\cdot)\), note that:
	\[
\begin{aligned}
|L(v)| &= \left|\int_\Omega \Phi(I,t,A)v \, dx\right| \\
       &\leq \|\Phi(I,t,A)\|_{L^2(\Omega)} \|v\|_{L^2(\Omega)} \\
       &\leq C_P \|\Phi\|_{L^2} \|v\|_{H_0^1}
\end{aligned}
\]
	Thus, \(L\) is a bounded linear functional on \(H_0^1(\Omega)\). By Lax-Milgram, there exists a unique \(u \in H_0^1(\Omega)\) such that \(a(u,v) = L(v)\) for all \(v \in H_0^1(\Omega)\), which is the weak solution to \eqref{eq:main_pde}.
\end{proof}

\section{Experimental Results}
\subsection{Datasets}
To assess the effectiveness and robustness of our proposed PDE (Partial Differential Equation) dehazing method in real-world scenarios, we employ a dataset comprising various hazy images captured in the wild. These images cover diverse scenes, including urban streets, natural landscapes, and architecture, with haze concentrations ranging from light to dense.

As these real-world images lack corresponding ground-truth (haze-free) counterparts, our evaluation relies exclusively on several widely-adopted No-Reference Image Quality Assessment (NR-IQA) metrics. This evaluation paradigm directly measures the visual quality of the output images—such as clarity, contrast, and artifact suppression—thereby reflecting the algorithm's practical performance in real-world applications.


\subsection{Implementation details}
Our framework is implemented in Python, with the iterative fixed-point solver accelerated on a single NVIDIA RTX 4090 GPU via the PyTorch framework. For the initial physical parameter estimation, the dark channel prior is computed using a $15 \times 15$ patch, and the transmission map is estimated with a factor of $\omega=0.95$. The core PDE model is configured with a diffusion stabilizer $\epsilon=10^{-3}$ and a nonlocal Gaussian operator employing a $5 \times 5$ kernel with $\sigma=2.0$. The adaptive regularization mechanism is controlled by coefficients $\lambda_0=0.5$ and $\beta=3.0$. The numerical PDE is solved using a fixed-point iteration scheme with a stable relaxation parameter of $\tau=0.2$.

\subsection{Quantitative results}


To comprehensively evaluate the superiority of our proposed Partial Differential Equation (PDE) framework, we conducted a quantitative comparison against several state-of-the-art (SOTA) and traditional image dehazing methods. The evaluation includes classic prior-based techniques (e.g., DCP~\cite{he2009single}, CAP~\cite{zhu2015fast}) and their advanced variants (ICAP~\cite{app9194011}, BCCR~\cite{6751186}), and the computationally intensive Color-Lines~\cite{fattal2008single}, as well as a fusion-based approach (VarFusion~\cite{7792620}) and prominent deep learning models (CORUN~\cite{fang2024real}, DiffDehaze~\cite{wang2025learninghazingdehazingrealistic}). Performance was assessed using six widely recognized No-Reference Image Quality Assessment (NR-IQA) metrics: NIQE~\cite{mittal2013making}, BRISQUE~\cite{mittal2012no}, PIQE~\cite{venkatanath2015blind}, FADE~\cite{choi2015referenceless}, DHQI, and NRBP. For NIQE, BRISQUE, PIQE, and FADE, lower values indicate better quality, while higher values are preferable for DHQI and NRBP.
As presented in Table~\ref{tab:quantitative_results}, our method demonstrates state-of-the-art performance by achieving the best results on five of six metrics. These results confirm the framework's advantages in producing images with superior naturalness, fewer artifacts, and better structural preservation.

\begin{figure}
    \centering
    \includegraphics[width=1\linewidth]{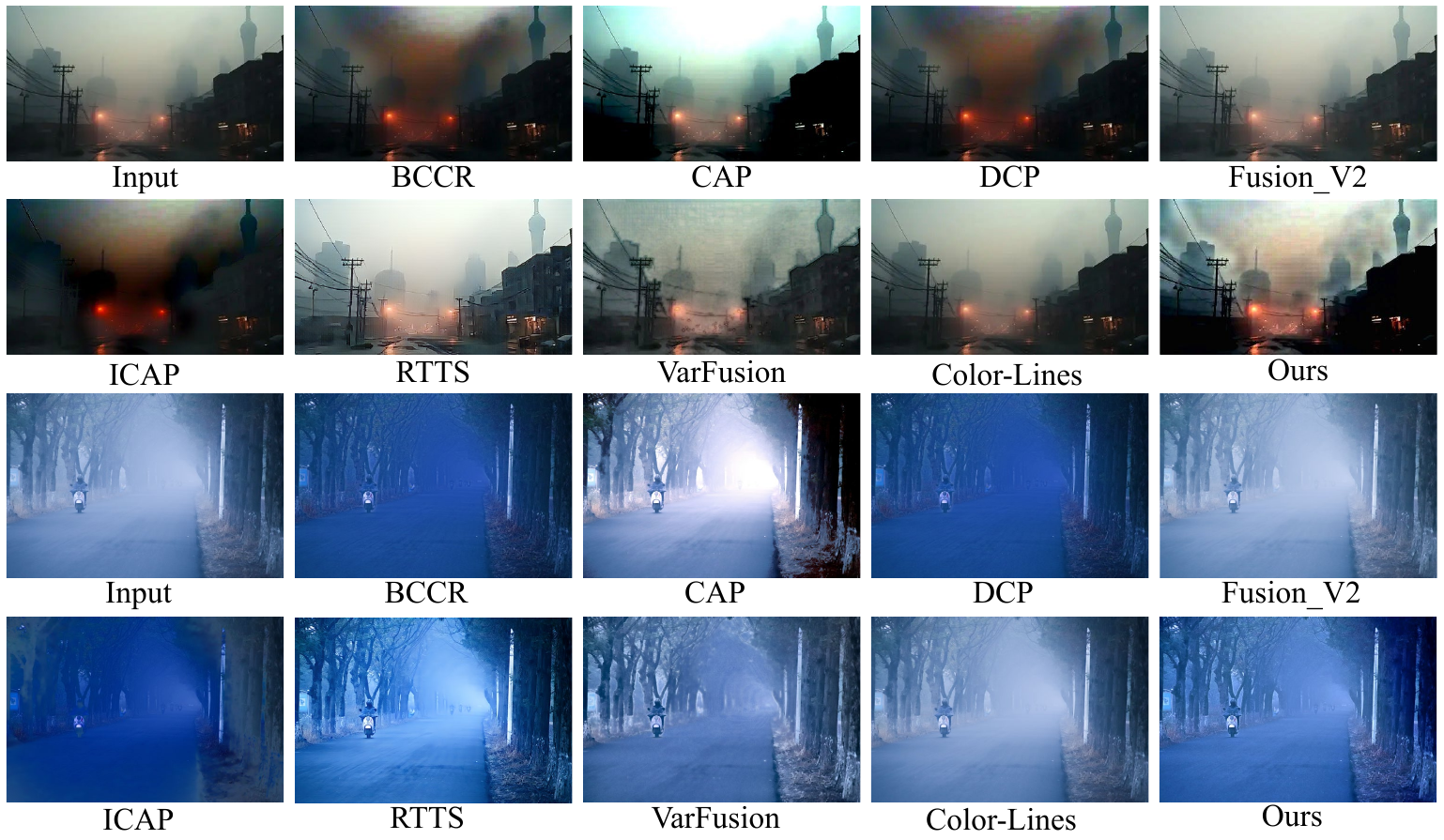}
    \caption{Qualitative comparison with state-of-the-art dehazing methods on real-world hazy images.}
    \label{fig:placeholder1}
    \vspace{-2mm}
\end{figure}

\begin{figure}
    \centering
    \includegraphics[width=1\linewidth]{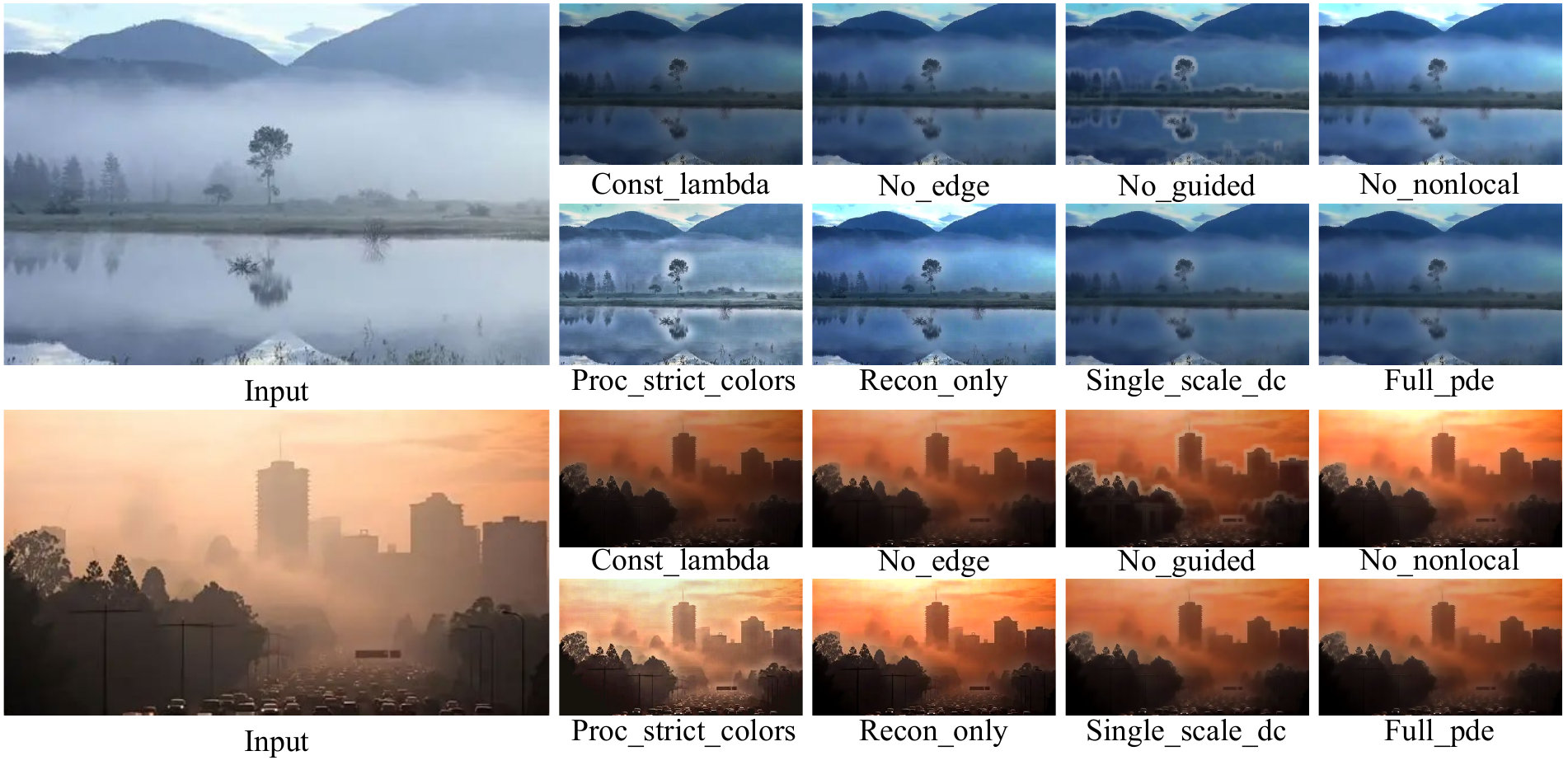}
    \caption{Visual results of the ablation study.}
    \label{fig:placeholder2}
    \vspace{-4mm}
\end{figure}
\vspace{-2mm}
\begin{table}[h!]
    \centering
    \caption{Quantitative comparison on a real-world dataset. We report results for six no-reference IQA metrics. For NIQE, BRISQUE, PIQE, and FADE, lower is better ($\downarrow$). For DHQI and NRBP, higher is better ($\uparrow$). The \textbf{best} and \underline{second-best} results are highlighted.}
    \label{tab:quantitative_results}
    \resizebox{\linewidth}{!}{%
    \begin{tabular}{@{}lcccccc@{}}
        \toprule
        \textbf{Method} & \textbf{NIQE} $\downarrow$ & \textbf{BRISQUE} $\downarrow$ & \textbf{PIQE} $\downarrow$ & \textbf{FADE} $\downarrow$ & \textbf{DHQI} $\uparrow$ & \textbf{NRBP} $\uparrow$ \\
        \midrule
        DCP & 5.34 & 32.97 & 48.09 & 48.81 & 55.73 & 24.08 \\
        CAP & 6.09 & 37.18 & 46.88 & 54.62 & 52.47 & 42.80 \\
        Color-Lines & 6.58 & 37.53 & 43.65 & 64.56 &  44.95& 16.66 \\
        ICAP & 7.05 & 49.30 & 55.37 & \underline{44.08} &48.58 & 4.42 \\
        BCCR & 5.66 & 33.49 & 45.44 & 46.28 & 56.16 & 29.79 \\
        VarFusion & \underline{4.78} & \underline{29.48} & 42.39 & 61.76 & 54.50 & 34.84 \\
        CORUN & 6.65 & 34.58 & \underline{38.94} & 47.98 & \underline{56.43} & \underline{75.99} \\
        DiffDehaze & 4.85 & 30.15 & 39.50& \textbf{43.55} & 56.20 & 74.32 \\
        \midrule
        \textbf{Ours} & \textbf{4.51} & \textbf{28.46} & \textbf{35.68} & 50.66 & \textbf{57.85} & \textbf{76.82} \\
        \bottomrule
    \end{tabular}%
    }
    \vspace{-4mm}
\end{table}

\begin{table}[t!]
\centering
\caption{Quantitative results of the ablation study on a real-world dataset. Each variant is compared against our full model to demonstrate the contribution of individual components. The arrows ($\uparrow$/$\downarrow$) indicate whether higher or lower values are better.}
\label{tab:ablation}
\resizebox{\linewidth}{!}{
\begin{tabular}{lcccccc}
\toprule
\textbf{Model} & \textbf{NIQE$\downarrow$} & \textbf{BRISQUE$\downarrow$} & \textbf{PIQE$\downarrow$} & \textbf{FADE$\downarrow$} & \textbf{DHQI$\uparrow$}& \textbf{ NRBP$\uparrow$} \\

\midrule
w/o PDE optimization &5.032 &31.71 & 37.64&52.26 &53.14 &41.49 \\
w/o Nonlocal regularization &8.25 &43.58 &36.92 &55.68 &43.59 &39.87 \\
w/o Adaptive regularization & 8.41& 33.43&36.67 &58.68 &41.79 &40.15 \\
w/o Edge-preserving term &8.06 &43.18 & 36.51&58.88 &41.19 &38.21 \\
w/o Guided filter & 6.82&32.88 & 35.86& 58.09&46.00 &51.33 \\
w/o Multiscale dark channel &8.21 & 39.39& 37.85&59.60 &41.08 &37.95 \\
Ours & \textbf{4.51} & \textbf{28.46} & \textbf{35.68} & \textbf{50.66} & \textbf{57.85} & \textbf{76.82}  \\
\bottomrule
\end{tabular}
}
\vspace{-4mm}
\end{table}

\subsection{Qualitative Results}
To visually substantiate our quantitative findings, a qualitative comparison is presented in Fig.~\ref{fig:placeholder1}. As illustrated, our proposed PDE-based method demonstrates marked superiority over competing approaches across diverse and challenging real-world scenes. Prior-based methods such as DCP and CAP are prone to introducing significant color casts and artifacts, particularly in low-light conditions or dense fog. While learning-based and other advanced techniques show improvement, they often leave residual haze or fail to fully restore fine textural details. In stark contrast, our method consistently produces visually superior results, effectively removing haze while preserving natural color fidelity and sharpening structural details without introducing artifacts.

\subsection{Ablation Studies}

To validate the contribution of each key component, we conducted a series of ablation studies by systematically removing individual modules. As shown in Table~\ref{tab:ablation}, the complete model consistently outperforms all ablated variants across all No-Reference Image Quality Assessment (NR-IQA) metrics. Notably, removing the PDE optimization stage degrades the NRBP score from 76.82 to 41.49, highlighting the critical role of iterative refinement. Similarly, disabling core components like nonlocal regularization or the edge-preserving term substantially impairs image naturalness and structure, evidenced by a collapse in NIQE and BRISQUE scores. These quantitative results are visually corroborated by Fig.~\ref{fig:placeholder2}, where ablated models exhibit residual haze and loss of detail, while the full model yields the clearest outcomes. This analysis confirms that each component is integral to achieving high-fidelity image dehazing.

\vspace{-2mm}
\section{Conclusion}
This paper presents a novel PDE-based framework for single-image dehazing that integrates the atmospheric scattering model, nonlocal regularization, and dark channel prior. The key innovations include:
1. A mathematically well-posed PDE model that ensures existence and uniqueness of weak solutions.
2. An adaptive regularization strategy that adjusts to local haze concentration.
3. An efficient GPU-accelerated fixed-point iteration scheme for real-time processing.

Experimental results on a real-world dataset confirm our method's superiority over state-of-the-art competitors, achieving leading scores across five no-reference IQA metrics. This framework bridges physical modeling with mathematical principles, offering a physically interpretable and high-fidelity alternative to purely data-driven techniques.


\begin{thebibliography}{1}
\small
\bibliographystyle{IEEEtran}


\bibitem{ayoub2025review}
A.~Ayoub, W.~El-Shafai, F.~E.~A. El-Samie, E.~K. Hamad, and E.-S.~M. EL-Rabaie, ``Review of dehazing techniques: challenges and future trends,'' \emph{Multimedia Tools and Applications}, vol.~84, no.~3, pp. 1103--1131, 2025.

\bibitem{chen2025mixnet}
W.~Chen, S.~Sun, Y.~Zhang, and Z.~Zheng, ``Mixnet: Efficient global modeling for ultra-high-definition image restoration,'' \emph{Neurocomputing}, p. 131130, 2025.

\bibitem{wu2024rethinking}
C.~Wu, Z.~Zheng, P.~Dai, C.~Shan, and X.~Jia, ``Rethinking image deraining via text-guided detail reconstruction,'' in \emph{2024 IEEE International Conference on Multimedia and Expo (ICME)}.\hskip 1em plus 0.5em minus 0.4em\relax IEEE, 2024, pp. 1--6.

\bibitem{liu2015dark}
S.~Liu, M.~Rahman, C.~Wong, S.~Lin, G.~Jiang, and N.~Kwok, ``Dark channel prior based image de-hazing: a review,'' in \emph{2015 5th International Conference on Information Science and Technology (ICIST)}.\hskip 1em plus 0.5em minus 0.4em\relax IEEE, 2015, pp. 345--350.

\bibitem{wu2025dropout}
C.~Wu, L.~Wang, L.~Peng, D.~Lu, and Z.~Zheng, ``Dropout the high-rate downsampling: A novel design paradigm for uhd image restoration,'' in \emph{2025 IEEE/CVF Winter Conference on Applications of Computer Vision (WACV)}.\hskip 1em plus 0.5em minus 0.4em\relax IEEE, 2025, pp. 2390--2399.

\bibitem{wang2024uncertainty}
B.~Wang, Q.~Ning, F.~Wu, X.~Li, W.~Dong, and G.~Shi, ``Uncertainty modeling of the transmission map for single image dehazing,'' \emph{IEEE Transactions on Circuits and Systems for Video Technology}, vol.~34, no.~11, pp. 11\,115--11\,127, 2024.

\bibitem{ren2016single}
W.~Ren, S.~Liu, H.~Zhang, J.~Pan, X.~Cao, and M.-H. Yang, ``Single image dehazing via multi-scale convolutional neural networks,'' in \emph{European conference on computer vision}.\hskip 1em plus 0.5em minus 0.4em\relax Springer, 2016, pp. 154--169.

\bibitem{liu2019griddehazenet}
X.~Liu, Y.~Ma, Z.~Shi, and J.~Chen, ``Griddehazenet: Attention-based multi-scale network for image dehazing,'' in \emph{Proceedings of the IEEE/CVF international conference on computer vision}, 2019, pp. 7314--7323.

\bibitem{chen2024towards}
H.~Chen, X.~Chen, C.~Wu, Z.~Zheng, J.~Pan, and X.~Fu, ``Towards ultra-high-definition image deraining: A benchmark and an efficient method,'' \emph{arXiv preprint arXiv:2405.17074}, 2024.

\bibitem{dibenedetto2023partial}
E.~DiBenedetto and U.~Gianazza, \emph{Partial differential equations}.\hskip 1em plus 0.5em minus 0.4em\relax Springer Nature, 2023.

\bibitem{wu2025adaptive}
C.~Wu, L.~Wang, X.~Su, and Z.~Zheng, ``Adaptive feature selection modulation network for efficient image super-resolution,'' \emph{IEEE Signal Processing Letters}, 2025.

\bibitem{chan2003variational}
T.~F. Chan, J.~Shen, and L.~Vese, ``Variational pde models in image processing,'' \emph{Notices AMS}, vol.~50, no.~1, pp. 14--26, 2003.


\bibitem{ju2017single}
M.~Ju, Z.~Gu, and D.~Zhang, ``Single image haze removal based on the improved atmospheric scattering model,'' \emph{Neurocomputing}, vol. 260, pp. 180--191, 2017.

\bibitem{he2009single}
K.~He, J.~Sun, and X.~Tang, ``Single image haze removal using dark channel prior,'' in \emph{2009 IEEE Conference on Computer Vision and Pattern Recognition (CVPR)}.\hskip 1em plus 0.5em minus 0.4em\relax IEEE, 2009, pp. 1956--1963.

\bibitem{zhu2015fast}
Q.~Zhu, J.~Mai, and L.~Shao, ``A fast single image haze removal algorithm using color attenuation prior,'' \emph{IEEE Transactions on Image Processing}, vol.~24, no.~11, pp. 3522--3533, 2015.

\bibitem{fattal2008single}
R.~Fattal, ``Single image dehazing,'' \emph{ACM Transactions on Graphics (TOG)}, vol.~27, no.~3, pp. 1--9, 2008.

\bibitem{mittal2013making}
A.~Mittal, R.~Soundararajan, and A.~C. Bovik, ``Making a “completely blind” image quality analyzer,'' \emph{IEEE Signal Processing Letters}, vol.~20, no.~3, pp. 209--212, 2013.

\bibitem{mittal2012no}
A.~Mittal, A.~K. Moorthy, and A.~C. Bovik, ``No-reference image quality assessment in the spatial domain,'' \emph{IEEE Transactions on Image Processing}, vol.~21, no.~12, pp. 4695--4708, 2012.

\bibitem{venkatanath2015blind}
N.~Venkatanath, D.~Praneeth, M.~C. Bh, S.~S. Channappayya, and S.~S. Medasani, ``Blind image quality evaluation using perception based features,'' in \emph{2015 National Conference on Communications (NCC)}.\hskip 1em plus 0.5em minus 0.4em\relax IEEE, 2015, pp. 1--6.

\bibitem{choi2015referenceless}
L.-K. Choi, J.~You, and A.~C. Bovik, ``Referenceless prediction of perceptual fog density and perceptual image defogging,'' \emph{IEEE Transactions on Image Processing}, vol.~24, no.~11, pp. 3888--3901, 2015.

\bibitem{fang2024real}
C.~Fang, C.~He, F.~Xiao, Y.~Zhang, L.~Tang, Y.~Zhang, K.~Li, and X.~Li, ``Real-world image dehazing with coherence-based pseudo labeling and cooperative unfolding network,'' in \emph{The Thirty-eighth Annual Conference on Neural Information Processing Systems}, 2024.

\bibitem{app9194011}
D.~Ngo, G.-D. Lee, and B.~Kang, ``Improved color attenuation prior for single-image haze removal,'' \emph{Applied Sciences}, vol.~9, no.~19, 2019. [Online]. Available: \url{https://www.mdpi.com/2076-3417/9/19/4011}


\bibitem{6751186}
G.~Meng, Y.~Wang, J.~Duan, S.~Xiang, and C.~Pan, ``Efficient image dehazing with boundary constraint and contextual regularization,'' in \emph{IEEE International Conference on Computer Vision}, Dec 2013, pp. 617--624.

\bibitem{wang2025learninghazingdehazingrealistic}
R.~Wang, Y.~Zheng, Z.~Zhang, C.~Li, S.~Liu, G.~Zhai, and X.~Liu, ``Learning hazing to dehazing: Towards realistic haze generation for real-world image dehazing,'' 2025. [Online]. Available: \url{https://arxiv.org/abs/2503.19262}


\bibitem{7792620}
A.~Galdran, J.~Vazquez-Corral, D.~Pardo, and M.~Bertalmío, ``Fusion-based variational image dehazing,'' \emph{IEEE Signal Processing Letters}, vol.~24, no.~2, pp. 151--155, 2017.







\end{thebibliography}
{

}
\newpage
\appendix

\section{Numerical Implementation}
\subsection{Fixed-Point Iteration Scheme}
The PDE is solved using an adaptive fixed-point iteration method. At each iteration \(n\), the solution is updated as:
\[
\begin{aligned}
u^{(n+1)} ={}& u^{(n)} + \tau \Big[ \text{div}(D_n\nabla u^{(n)}) \\
             & - \lambda(t) G(u^{(n)}) + \Phi(I,t,A) \Big]
\end{aligned}
\]
where \(D_n = D(\nabla u^{(n)})\) and \(\tau\) is the relaxation parameter. The scheme is derived by linearizing the nonlinear diffusion term and using an explicit update. The relaxation parameter \(\tau\) must satisfy:
\[
0 < \tau < \frac{2}{\|D(\nabla u)\Delta\| + \lambda(t)M}
\]
to ensure convergence, where \(M\) is the bound of \(G\). In practice, \(\tau = 0.2\) is found to be stable across different haze conditions.

\subsection{Discretization of PDE Operators}
\subsubsection{Diffusion Term Discretization}
The divergence term \(\text{div}(D(\nabla u)\nabla u)\) is discretized using central differences on a uniform grid. For a 2D image with pixel \((i,j)\), the x-derivative of \(D(\nabla u)u_x\) is:
\[
\begin{aligned}
&\left[\frac{\partial}  {\partial x}\left(D(\nabla u)  u_x  \right)\right]_{i,j}  \approx{}  \\
& \frac{D_{i+1/2,j} u_x^{i+1/2,j} - D_{i-1/2,j} u_x^{i-1/2,j}}{\Delta x}
\end{aligned}
\]
where \(D_{i+1/2,j} = D\left(\frac{u_{i+1,j} - u_{i,j}}{\Delta x}, \frac{u_{i,j+1} - u_{i,j-1}}{2\Delta y}\right)\) and \(u_x^{i+1/2,j} = \frac{u_{i+1,j} - u_{i,j}}{\Delta x}\). Similarly for the y-derivative, with \(\Delta x = \Delta y = 1\) for pixel grids.

\subsubsection{Nonlocal Term Implementation}
The Gaussian convolution \(G(u)\) is efficiently computed using \(\text{scipy.ndimage.gaussian\_filter}\) in Python, or via PyTorch's built-in convolution for GPU acceleration. The kernel size is set to \(5 \times 5\) with \(\sigma = 2.0\), balancing computational efficiency and nonlocal interaction range.

\subsection{User study}
To evaluate the subjective visual perception of our proposed method against other methods, we conducted a user study. We invited five experts with an image processing background and 16 naive observers as testers. These testers were instructed to focus on three primary aspects: (i) Haze density compared to the original hazy image, (ii) Clarity of details in the dehazed image, and (iii) Color and aesthetic quality of the de-hazed image. The results for each method, along with the corresponding hazy images, were presented to the testers anonymously. They were asked to score each method on a scale from 1 (worst) to 10 (best). The haze images were randomly selected, with a total of 50 images from our real-world dataset. The user study scores are reported in Table~\ref{tab:user_study}, showing that our method achieved the highest average score.

\begin{table}[h!]
\centering
\caption{User study scores on the real-world dataset.}
\label{tab:user_study}
\resizebox{\linewidth}{!}{%
\begin{tabular}{l|cccccccc}
\hline
\textbf{Dataset} & \textbf{DCP} & \textbf{CAP} & \textbf{Color-Lines}& \textbf{BCCR} & \textbf{VarFusion} & \textbf{CORUN} & \textbf{DiffDehaze } & \textbf{Ours} \\ \hline
Score            & 5.31             & 5.19     & 5.65         & 5.82               & 6.55                   & 6.81                & 7.03                     & \textbf{8.79} \\ \hline
\end{tabular}
}
\end{table}

\begin{figure}
    \centering
    \includegraphics[width=0.95\linewidth]{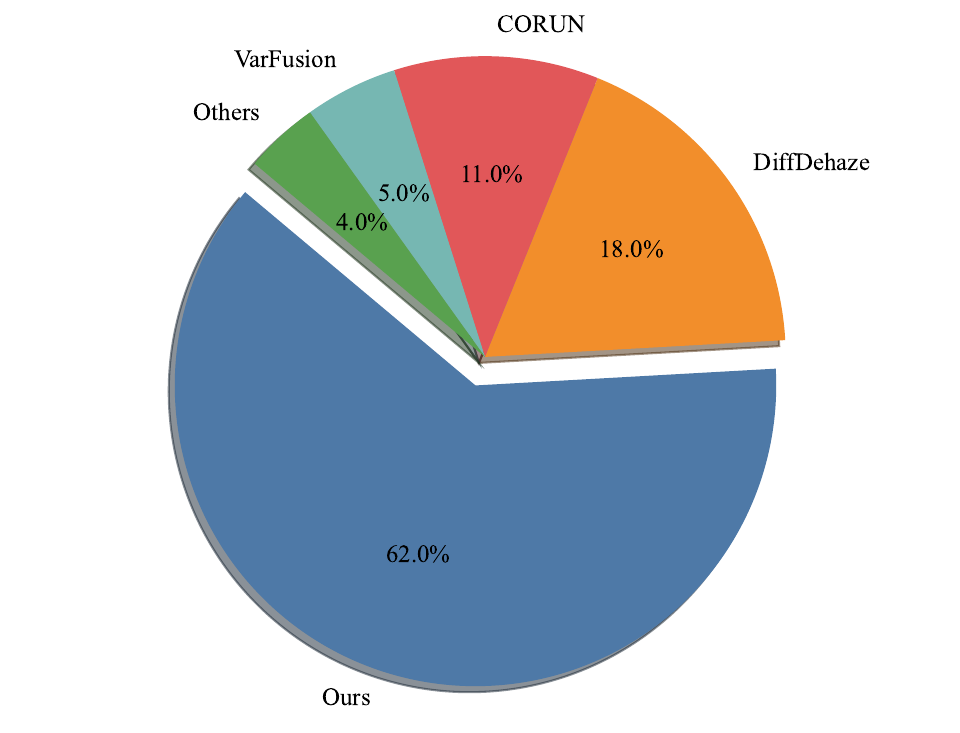}
    \caption{Distribution of best visual quality votes.}
    \label{fig:placeholder2}
    \vspace{-4mm}
\end{figure}

To evaluate subjective visual quality, we conducted a user preference study where participants, including five imaging experts and 16 naive observers, rated images based on haze removal, clarity, and color naturalness. The study used a total of 50 randomly selected real-world images.

As illustrated in the statistical summary in Fig.~\ref{fig:placeholder2}, our proposed method was judged to have the "Best Visual Quality" in a clear majority of cases (62\%). This indicates our approach excels at removing haze without introducing common artifacts, while effectively maintaining fine-grained textures and high color fidelity. Although deep learning methods like DiffDehaze and CORUN performed respectably, receiving 18\% and 11\% of the votes respectively, they occasionally failed to preserve details. Traditional and fusion-based approaches, such as VarFusion (5\%), proved least effective. In summary, the study validates that our method's quantitative superiority translates to results that are also perceptually preferred by human observers.

\end{document}